\Crefname{figure}{Fig.}{Figs.}
\Crefname{equation}{Eq.}{Eqs.}
\Crefname{section}{Sec.}{Secs.}
\newcommand{\reals}{\mathbb{R}}
\def\defeq{\triangleq}
\newcommand{\inner}[2]{\langle{#1},{#2}\rangle}
\theoremstyle{plain}
\newtheorem{theorem}{Theorem}[section]
\theoremstyle{definition}
\newtheorem{definition}[theorem]{Definition}
\theoremstyle{remark}
\newcommand{\ebf}[1]{\textcolor{cyan}{[EBF: #1]}}
\newcommand{\ourmodel}{\textsc{MultiresNet}}
\newcommand{\treeselect}{\textsc{TreeSelect}}
\newcommand{\ourlayer}{\textsc{MultiresLayer}}
\newcommand{\ourblock}{\textsc{MultiresBlock}}
\newcommand{\treeblock}{\textsc{MultiresBlock}}
\newcommand{\ourconv}{\textsc{MultiresConv}}
\newcommand{\prepar}{\vspace{-0.1in}}
\icmltitlerunning{~ \hfill Sequence Modeling with Multiresolution Convolutional
Memory \hfill \thepage}
\begin{document}

\twocolumn[
\icmltitle{Sequence Modeling with Multiresolution Convolutional Memory}

\icmlsetsymbol{equal}{*}

\begin{icmlauthorlist}
\icmlauthor{Jiaxin Shi}{stanford}
\icmlauthor{Ke Alexander Wang}{stanford}
\icmlauthor{Emily B. Fox}{stanford,czhubsf}
\end{icmlauthorlist}

\icmlaffiliation{stanford}{Stanford University}
\icmlaffiliation{czhubsf}{CZ Biohub SF}

\icmlcorrespondingauthor{Jiaxin Shi}{jiaxins@stanford.edu}

\icmlkeywords{Machine Learning, ICML}

\vskip 0.3in
]

\printAffiliationsAndNotice{}  %

\begin{abstract}
\comment{
\ebf{needs updating: Modeling the dependence of sequential data over a long time horizon requires an effective memory of the past.  
Recurrent neural networks 
have a built-in memory in the form of hidden states. 
Convolutions, despite also widely used for sequence modeling, typically lack such a structure. 
In this work, we introduce a class of neural network architectures with a hidden memory constructed by multiresolution convolutions. 
We provide a theoretical framework based on wavelets for deriving such neural networks. 
Our final architecture has a hierarchical dilated convolution structure similar to that of WaveNets but also has a few key differences, e.g., filters are tied for all but the last layers, and the output is generated through a tree-based convolution. 
We apply our multiresolution convolution layers to a variety of sequence modeling tasks...
}}

Efficiently capturing the long-range patterns in sequential data sources salient to a given task---such as classification and generative modeling---poses a fundamental challenge. Popular approaches in the space tradeoff between the memory burden of 
brute-force enumeration and comparison,
as in transformers, the computational burden of complicated sequential dependencies, %
as in recurrent neural networks, or the parameter burden of convolutional networks with many or large filters.  
We instead take inspiration from wavelet-based multiresolution analysis to define a new building block for sequence modeling, which we call a \ourlayer. 
The key component of our model is the \emph{multiresolution convolution}, capturing multiscale trends in the input sequence. 
Our \ourconv\ 
can be implemented with \emph{shared} filters across a dilated causal convolution tree. 
Thus it %
garners the computational advantages of convolutional networks and the principled theoretical motivation of wavelet decompositions.
Our \ourlayer\ is straightforward to implement, requires significantly fewer parameters, and maintains at most a $\mathcal{O}(N\log N)$ memory footprint for a length $N$ sequence.
Yet, by stacking such layers, our model yields state-of-the-art performance on a number of sequence classification and autoregressive density estimation tasks using CIFAR-10, ListOps, and PTB-XL datasets.

\end{abstract}

\section{Introduction}

A key challenge in sequence modeling is summarizing, or \emph{memorizing}, long-term patterns in data informative for a particular task, such as classification, forecasting, or clustering. %
By definition, patterns are higher-level structures in the data that arise from multiple timesteps.
However, patterns can occur at multiple levels, corresponding to different timescales.  For example, in studying energy consumption, patterned variations may occur within a day, between days, and quarterly. Similar salient multiscale trends appear in physiological time series such as dysfunctional glucose patterns in diabetic patients and anomalous heart beats in arrhythmic patients.  Audio signals of speech may be described in terms of utterances, phonemes, and subphonemes.  And, the multiscale structure of images and video has been well-studied.  Even for data sources without an explicit multiscale interpretation, multiscale modeling approaches can provide an efficient mechanism for capturing long-range patterns.  %

In this paper, we propose a general and reusable building block for sequence modeling---%
\ourlayer---leveraging a multiscale approach to memorize past data.  
We view memory through the lens of multiresolution analysis (MRA)~\cite{willsky2002multiresolution}, with a particular emphasis on wavelet analysis, a powerful tool from signal processing for compression, denoising, feature extraction, and more~\cite{jawerth1994overview,akansu2001multiresolution}. As discussed in \cref{sec:wavelet-background}, wavelet analyses can be computed in a computationally efficient manner and interpreted as a series %
of convolutions. 
However, our use of wavelets is a design choice and other MRA techniques could likewise be considered for 
memorizing patterns at different timescales.

Taking inspiration from wavelets, the 
key component of \ourlayer\ is a \emph{multiresolution convolution} operation (\ourconv) that retains the overall tree-structure of MRA. 
We show that constructing a memory of the past at each timestep of the sequence using \ourconv\ can be collectively implemented as a stack of carefully-placed dilated causal convolutions with filters shared between levels. 
In contrast to traditional wavelet analysis, however, we \emph{learn} the filters and do so end-to-end. When we fix the filters to pre-defined wavelet filters, the \ourconv\ reduces to a traditional discrete wavelet transform, though we show the benefits of learning the filters in \cref{sec:exp-ablation}.  
The basic \ourconv\ building block can be stacked in a multitude of ways that are common in deep learning models (e.g., across multiple channels, vertically as multiple layers, etc.). 
Our model resembles WaveNet~\cite{oord2016wavenet} in the use of tree-structured dilated convolutions. 
However, our principle-guided design has distinct skip-connection structures and filter sharing patterns, resulting in significantly better parameter efficiency and performance (see \cref{sec:related-work} for further details).

There is a rapidly growing literature on machine learning for sequence modeling. %
Popular classes of approaches include variants of recurrent networks~\citep{hochreiter1997long}, self-attention networks~\citep{vaswani2017attention}, and state-space models~\citep{gu2021efficiently}. See \cref{sec:related-work} for further discussion.
Our \ourlayer\ has key advantages over this body of past work:
\begin{itemize}%
\vspace{-0.1in}
\itemsep0em
    \item \textbf{Architecture simplicity:} The workhorse of our layer is simple dilated convolutions and linear transforms.
    \item \textbf{Efficient training:} Our layer parallelizes easily across hardware accelerators implementing convolutions.
    \item \textbf{Parameter efficiency:} Our layer reuses filters across the stack of depthwise dilated convolutions.
\end{itemize}
\vspace{-0.1in}
Likewise, by leveraging an MRA structure, we start from a principled and interpretable framework for thinking about memory in sequence modeling. Furthermore, we can lean on the vast MRA literature for modeling generalizations, such as shift-invariant wavelet transforms~\cite{kingsbury1998dual,selesnick2005dual} for shift-invariant representation learning, scaling to multiple input dimensions, etc.

Our empirical evaluation covers 
sequential image classification and autoregressive generative modeling (CIFAR-10), reasoning on syntax trees (ListOps), and multi-label classification of electrocardiogram (PTB-XL). 
We also note that our proposed \ourconv s can readily be applied and extended to other tasks such as representation learning 
and long-term forecasting. 
Likewise, although we focus on sequence analysis, the ideas we propose generalize to other data domains with multiresolution structure, such as images and videos.
Exploring the application of \ourlayer\ in these settings is an exciting future direction.

\section{Background: Wavelet Decompositions}
\label{sec:wavelet-background}

\begin{figure}[t]
\includegraphics[width=0.9\linewidth]{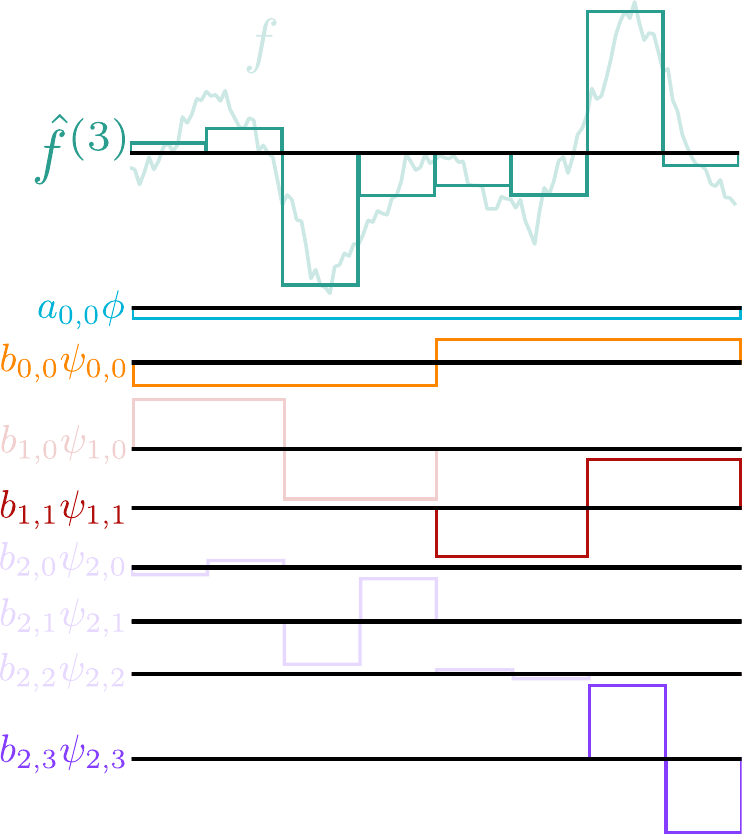}
\centering
\vspace{-2mm}
\caption{In standard MRA, we approximate the continuous signal $f$ with $\hat{f}^{(j)}$. Here, we visualize $\hat{f}^{(3)}$ and its decomposition into a sum of functions that capture structures of increasing resolution over a binary-tree-partitioned input space, corresponding to \cref{eq:wavelet-series}. Components belonging to the same level have the same color. The non-transparent components of each resolution level retain the most recent information in the decomposition. Retaining only these components corresponds to our \ourlayer\ with a ``resolution fading'' \textsc{TreeSelect}.
} %
\vspace{-4mm}
\label{fig:components}
\end{figure}

In contrast to the frequency-domain analysis of Fourier transforms, \emph{wavelets} provide a \emph{time--frequency} analysis. In particular, wavelets are a finite-support basis with a multiresolution structure, i.e., basis functions are divided into groups with different resolutions---some focus on ``local'' function values at very short timescales, while others capture more ``global'' structures at longer timescales. 
In the following, we explain the idea of wavelet MRA with the simplest wavelet family---\emph{Haar wavelets}. 
A formal treatment covering all orthogonal wavelets is in \cref{app:wavelets}. 

Suppose we want to approximate a signal $f(t)$ over the time interval $[0, 1)$. 
The roughest approximation we can produce is $\hat{f}^{(0)}(t) \triangleq a_{0,0} \phi(t)$ where 
$
\phi(t) = \mathbf{1}(0 \leq t < 1)
$
and $a_{0,0} = \int_0^1 f(t) dt$ is the average value of $f$. 
We use superscript $0$ to indicate that this is the lowest resolution approximation of $f$ we make.
We can better approximate $f$ %
by dividing the unit interval in two and approximating $f$ as: %
$f(t) \approx \hat{f}^{(1)}(t) \triangleq a_{1,0}\phi(2t) + a_{1,1}\phi(2t-1)$ where $a_{1,0}=\int_0^{1/2} f(t) dt$ and $a_{1,1}=\int_{1/2}^1 f(t)dt$.

We can repeat this procedure of halving the intervals, rescaling, and translating $\phi$, to get finer approximations $\{\hat f^{(j)}\}_{j\in \mathbb{N}_0}$.
Each $\hat f^{(j)}$ is a linear combinations of compactly supported basis functions,
$\{\phi_{j,k}(t) \triangleq 2^{j/2} \phi(2^j t - k)\}_{k\in \mathbb Z}$,
with their resolution levels indexed by $j$:
\begin{align*}
    \hat{f}^{(j)}(t) = \sum_{k \in \mathbb{Z}} a_{j,k} \phi_{j,k}(t), \text{\; where } a_{j,k} = \inner{f}{\phi_{j,k}}.  
\end{align*}
For each level $j \in \mathbb{N}_0$,  the subspace $V_j \defeq \mathrm{span}(\{\phi_{j,k}\}_{k\in \mathbb{Z}})$
contains functions that are constant over intervals of length $1/2^j$.
In other words, \emph{basis functions in $V_j$ describe structures in $f$ no larger than the timescale of $\Delta t \sim 1/2^j$}. For sufficiently large $j$, $V_j$ has the capacity to approximate any continuous time series arbitrarily well. 

One may try to summarize or represent $f$ by collecting the coefficients $\{a_{j,k}\}_{k\in \mathbb{Z}}$ into a vector.
Though the coefficients altogether fully describe the approximation $\hat{f}^{(j)}$, each individual coefficient alone may be too local to be representative of structures in $f$.
Each $a_{j,k}$ only summarizes the value of $f$ within a $1/2^j$ interval, while patterns may occur over larger intervals.
We would need multiple $a_{j,k}$ to summarize these larger-scale structures.
Is there a way to produce coefficients each of which summarizes a structure at a different scale?

\prepar
\paragraph{Representing structure at disjoint resolutions.}
We can indeed produce this kind of representation by using tools from MRA.
In MRA, we repeatedly decompose $V_j$ into the sum of a lower-resolution subspace $V_{j-1}$ and its orthogonal complement $W_{j-1}$: $V_j = V_{j-1} \oplus W_{j - 1}$.
Since basis functions in $V_{j}$ and $V_{j-1}$ describe structures at scales coarser than $\Delta t \sim 1/2^{j}$ and $\Delta t \sim 1/2^{j-1}$, respectively, basis functions in $W_{j-1}$ represent structures \emph{exactly at} the $1/2^j$ scale, summarized by the basis coefficients $\{b_{j,k}\}_{k\in \mathbb{Z}}$.
Starting from some high-resolution level $J$ and repeating this process, we have 
\begin{align*}
    V_J = V_{J-1} \oplus W_{J-1} = V_0 \oplus W_0 \oplus \ldots \oplus W_{J-2} \oplus W_{J - 1}
\end{align*}
and, as visualized in \cref{fig:components},
\begin{align} \label{eq:wavelet-series}
f(t) \approx \hat{f}^{(J)}(t) = a_{0,0} \phi(t) + \sum_{j'=0}^{J-1} \sum_{k\in \mathbb Z} b_{j',k} \psi_{j',k}(t).
\end{align}
The basis functions $\{\psi_{j,k}\}$ are called Haar wavelets and $\phi$ is called their scaling function; %
see \cref{app:haar-wavelets} for their functional forms. 
The coefficients\footnote{$\{a_{j,k}\}$ and $\{b_{j,k}\}$ are called the approximation coefficients and detail coefficients in signal processing. Note that some sources like \citet{Lee2019PyWaveletsPythonPackage} call $V_J$ level 0, $V_{J-1}$ level 1, and so on.} $\{a_{0,0}\} \cup \{b_{0,k}\}_{k\in \mathbb{Z}} \cup \ldots \cup \{b_{J-1,k}\}_{k\in \mathbb{Z}}$ now summarize the structures of $f$ at multiple resolutions, ranging from $1/2^0$ to $1/2^{J-1}$.

\prepar
\paragraph{Computing the representation.}
Our original problem of summarizing the multiresolution structures of $f$ then comes down to computing the wavelet basis coefficients $a_{0,0}, \{b_{j,k}\}$ of the approximation $\hat{f}^{(J)} \in V_J$.
See \cref{app:haar-wavelets} for how to compute these coefficients for Haar wavelets.
In general, we can efficiently and recursively compute these coefficients for any wavelet family using the discrete wavelet transform (DWT; see \cref{app:dwt}). 

In \cref{app:other-exps}, 
we illustrate the representational power of wavelet transforms.  In particular, we consider a raw audio waveform capturing 1 second of a recording at a sampling rate of 16,384.  We use a 10-level wavelet tree with a total of 2068 coefficients used to reconstruct the audio signal.  The wavelet transform is able to ``memorize'' many of the important patterns of the audio signal over this long sequence.  This representational power motivates our \ourlayer\ outlined in \cref{sec:multiresnet}.

\section{Sequence Modeling with Multiresolution Convolutions}
\label{sec:multiresnet}
\begin{figure*}[t]
\includegraphics[width=0.85\linewidth]{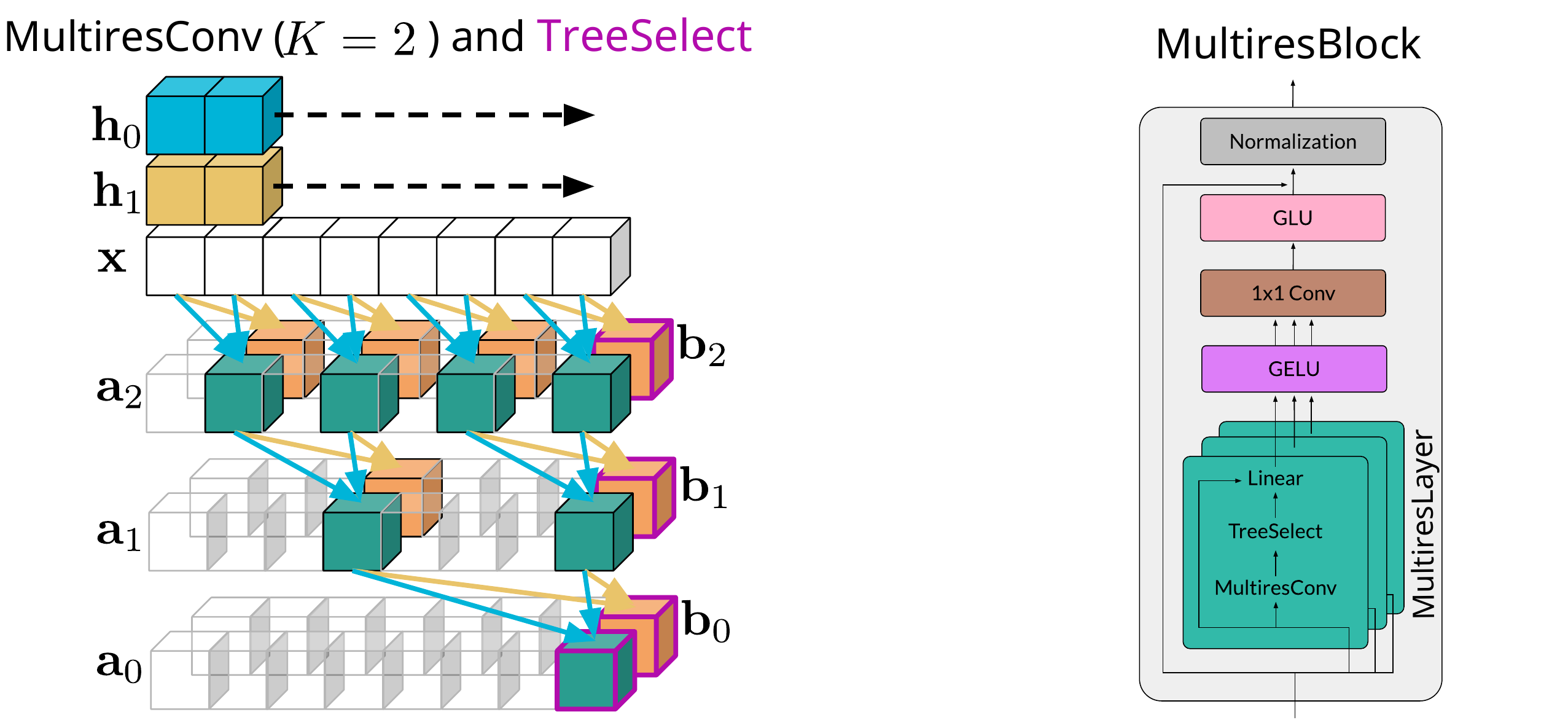}
\centering
\vspace{-4mm}
\caption{(Left) \ourconv\ consists of a sequence of dilated convolutions that share the same filters $\mathbf h_0, \mathbf h_1$ across all levels. Here we illustrate \textsc{TreeSelect} with the ``resolution fading'' strategy of keeping the right-most coefficient at each level of $\mathbf{a}_0, \mathbf{b}_{0:J-1}$, 
as indicated by magenta outlines. 
(Right) A schematic of our \treeblock\ architecture. Each channel of the input sequence is processed independently in the \ourlayer. 
The 1x1 convolution mixes the information across channels. 
We stack multiple \treeblock s\  to build our deep sequence models.} %
\vspace{-4mm}
\label{fig:multires-architecture}
\end{figure*}
We leverage the computation structure of DWT to construct a multiresolution memory for sequences. 
Given a sequence $\mathbf x \in \reals^{N}$ representing a discretely sampled signal, %
the DWT can be implemented by the following recursive computations for $\mathbf a_0$ and $\mathbf b_{0:J-1} \triangleq (\mathbf b_0, \mathbf b_1, \ldots, \mathbf b_{J-1}) $ starting from $\mathbf a_{J}(n) = \mathbf x(n)$\footnote{To deal with finite sequences, we pad $\mathbf{a}_{j+1}$ with zeros on the left before each iteration of the recursion, if necessary, to ensure that no elements of $\mathbf{a}_{j+1}$ are excluded. }:
\begin{align*}
\mathbf a_{j}(n) \triangleq a_{j,n} &= \sum_{k=0}^{K - 1} \mathbf a_{j+1}(2n + k) \mathbf h_0(k), \\
\mathbf b_{j}(n) \triangleq b_{j,n} &= \sum_{k=0}^{K - 1} \mathbf a_{j+1}(2n + k) \mathbf h_1(k), 
\end{align*}
where the filters $\mathbf h_0, \mathbf h_1 \in \reals^{K}$ are determined by the class of wavelets. 
For Haar wavelets, we have $\mathbf h_0 = (\frac{\sqrt{2}}{2}, \frac{\sqrt{2}}{2})$ and $\mathbf h_1 = (\frac{\sqrt{2}}{2}, -\frac{\sqrt{2}}{2})$. 
To decouple the underlying computation from the choice of filters, 
we define this procedure composed of convolution and downsampling at multiple scales as the \emph{multiresolution convolution} operation:
\begin{align}
\mathbf a_0, \mathbf b_{0:J-1} = \textsc{MultiresConv}(\mathbf x, \mathbf h_0, \mathbf h_1, J).
 \label{eq:multires-conv}
\end{align}
Recall, the coefficients $\{\mathbf a_{0}(n)\}$ and $\{\mathbf b_{j}(n)\}$ 
serve as a multiresolution representation of $\mathbf x$.  %
When the filter values come from wavelets, we can perfectly reconstruct the original sequence $\mathbf x$ from the coefficients $\mathbf a_0, \mathbf b_{0:J-1}$ by inverting the recursive procedure.
In other words, this procedure is powerful enough to give us \emph{perfect} memory of the past, summarized by the coefficients.

Instead of setting the filters to fixed values, however, we propose to use \ourconv\ as a building block for sequence models by letting $\mathbf h_0, \mathbf h_1$ be learnable.
Learning the filters allows us to go beyond hand-designed wavelets while still keeping the multiresolution structure in our computation. 
Although we may lose the ability to perfectly reconstruct the input, we will see in our experiments that we gain significant predictive improvements in return.

\subsection{A general sequence modeling layer}

Most competitive sequence models have the structure of mapping a sequence $\mathbf x\in \reals^N$ to another sequence $\mathbf y \in \reals^N$. 
The output at each timestep should 1) be relevant to the input taken in at that step and 2) capture potentially long-range dependencies from previous timesteps. 
This can be achieved by brute-force enumeration and comparison~\citep{vaswani2017attention}, which has quadratic complexity with respect to the input length. 
One can avoid quadratic scaling by maintaining a fixed-length memory of the past, as is done in recurrent models.  
However, a fixed-length memory will lose information over time, so we must ensure that the memory always retains the most relevant historical information.%

We use the \textsc{MultiresConv} operation of \cref{eq:multires-conv} as a memory mechanism to capture historical patterns at multiple resolutions. 
The key idea is to perform \textsc{MultiresConv} on the sequence up to the current timestep $t$, $\mathbf x(0:t)$, and \emph{select} a relevant subset of the representation coefficients $(\mathbf a^{t}_{0}$, $\mathbf b_{0:J-1}^{t})$ to form the memory vector $\mathbf z_t \in \reals^M$ at step $t$: 
\begin{align}
\mathbf a_0^{t}, \mathbf b_{0:J-1}^{t} &= \textsc{MultiresConv}(\mathbf x(0:t), \mathbf h_0, \mathbf h_1, J), \label{eq:seq-multiresconv} \\
\mathbf z_t &= \treeselect((\mathbf a_0^{t}, \mathbf b_{0:J-1}^{t}), M).\label{eqn:treeselect}
\end{align}
The $t$-step output $\mathbf y(t)$ is generated from the input at this step and the memory through a linear transform: 
\begin{align*}
\mathbf y(t) = \mathbf w^\top \begin{bmatrix}
\mathbf z_t \\
\mathbf x(t) 
\end{bmatrix}.
\end{align*} 
We repeat this procedure for $t=0, 1, \dots, N-1$ to get the entire output sequence $\mathbf y \in \mathbb{R}^N$. 
This architecture is schematically depicted in \cref{fig:multires-architecture} and
we discuss the coefficient selection process \treeselect\ in \cref{sec:memory}.

Interestingly, when $J$ is chosen to be the same for all $t$, the \ourconv s across time can be collectively viewed as multiple layers of dilated causal convolutions, similar to those in WaveNet~\citep{oord2016wavenet}. 
Thus, all computations here can be done in time linear in the sequence length and are massively parallelizable on modern hardware that implement convolutions. 
By default, we set
\begin{align} \label{eq:J-value}
    J = \left\lceil\log_2\Big(\frac{N - 1}{K - 1} + 1\Big)\right \rceil
\end{align}
such that $\mathbf{a}^t_0$ has a single element $a_{0,0}^t$ and its receptive field covers the whole $x(0:t)$ for any $t \leq N$.

\subsection{The memory mechanism}\label{sec:memory}
Ideally, we want to keep the entire sequence of $\mathbf a_0^{t}, \mathbf b_{0:J-1}^{t}$ for $t=0,\ldots, N$.
However, 
storing all of these coefficients %
requires space quadratic in $N$. Thus we need a way to \emph{select} coefficients from our \ourconv\ tree that best summarizes the history. 
We call this operation \treeselect.

We studied two strategies for determining which part of the \textsc{MultiresConv} outputs to include in the memory:
\begin{enumerate}[(1)]
    \item \textbf{Uniform over time:} Traditional usage of wavelets in signal compression and denoising thresholds out the time-localized, high-frequency components larger than a particular scale $j$. 
    Here, we can do the same by keeping $\mathbf a^{(t)}_0$ and $\mathbf b^{(t)}_{0}, \ldots, \mathbf b^{(t)}_j$ for $j$ below a threshold. 
    \item \textbf{Resolution fading:} Another strategy is to gradually increase our approximation resolution as time approaches $t$. 
    This can be achieved by keeping the entries of $\mathbf a^{(t)}_0$ and $\{\mathbf b^{(t)}_{j}\}$ at the highest index at each resolution level. We visualize this in \cref{fig:components,fig:multires-architecture}.
    This approximation allows us to retain the coefficients that are the most important for reconstructing the recent input values.
\end{enumerate}
Although our ablation study of \cref{sec:exp-ablation} did not highlight statistically significant differences between %
the two strategies, 
we use resolution fading in all of our experiments to bias the model to focus on memorizing the most recent information and for a simpler implementation---we provide an example PyTorch implementation %
in \cref{app:torch-impl}.
Similar techniques of upweighting recent history proved critical in state-space models~\citep{gu2020hippo,gu2021efficiently}.

In addition to the methods outlined above, an obvious next choice would be to dynamically select the relevant multiresolution components for the task at hand, perhaps via %
attention %
or $L_1$ regularization.  We leave this as future work.

\subsection{Multiple channels and the mixing layer}

The above only describes a mapping between single-channel sequences. 
To deal with an input $\mathbf x \in \reals^{d\times N}$ with $d$ channels, we adopt a simple strategy of stacking $d$ independent layers, %
which can be efficiently implemented using \emph{depthwise} causal dilated convolutions. 
The $d$ output sequences, after passing through a nonlinear activation function, 
are then mixed by a $1\times 1$ convolution. 
Here, we use the GELU activation~\citep{hendrycks2016gaussian}.
We note that this use of depthwise convolution followed by a mixing across channels has proven successful in a number of sequence modeling practices~\citep{wu2018pay,gu2021efficiently}.

\subsection{Stacking into deep models}

We wrap the sequence modeling layer and the mixing layer in a residual network block~\citep{he2016deep} with gated linear activations~\citep{dauphin2017language} and identity skip connections (see \cref{fig:multires-architecture}). 
We observed that the gating mechanism improves our model performance, which is consistent with observations made in a number of other sequence models~\citep{dauphin2017language, van2016conditional, oord2016wavenet, gu2021efficiently}. 
We then stack multiple residual blocks into a deep model. 
Depending on the task, layer normalization~\citep{ba2016layer} or batch normalization~\citep{ioffe2015batch} is applied after each residual block. 
We refer to this model as \ourmodel\ in our experiments.

\subsection{Optimization and regularization} 
We use the Adam optimizer with default hyperparameters and decoupled weighted decay~\citep{loshchilov2018decoupled}. 
Dropout is applied after the GELU and gated linear activation functions whenever overfitting is observed.

\section{Related Work}
\label{sec:related-work}

There has been significant activity building convolutional neural networks for sequential data, inspired by applications in modeling audio~\citep{waibel1989phoneme,oord2016wavenet} and natural language~\citep{collobert2011natural,kalchbrenner2016neural}. 
Recent work aims to improve the performance of such models through the use of gating units~\citep{dauphin2017language}, depthwise convolution~\citep{kaiserdepthwise}, input-dependent weights~\citep{shen2018learning,wu2018pay}, weight tying across depth~\citep{bai2019trellis}, and adaptive filter size~\cite{romero2022flexconv}. 
Our \textsc{MultiresConv} shares many ingredients of these works, but is unique in two aspects. 
First, our method explicitly defines memory units $\mathbf z_t$,  %
which are important for modeling long-range dependencies. 
Second, we have better theoretical underpinnings to our \textsc{MultiresConv} since it collapses to the standard DWT when the filters are specified as wavelet filters.

Among all convolutional sequence models, the closest to our work is the WaveNet architecture proposed in \citet{oord2016wavenet}. 
As shown in \cref{fig:multires-architecture}, when the filter size is 2, our computation graph for $a_j$s shares the same connection pattern as WaveNet. 
Both models are implemented with tree-structured dilated causal convolutions. 
However, there are three main differences between the two models: (1) our explicit memory construction via \treeselect\ creates a distinct skip-connection structure compared to WaveNet; (2) our model uses the same filters for all timescales; (3) we do not use nonlinear activation functions between convolutions at different timescales. 
As a result, our architecture is simpler and uses significantly fewer parameters than WaveNet. 
Additionally, the link we establish between wavelets and tree-structured dilated causal convolutions offers the first principled justification for the effectiveness of WaveNet in modeling raw audio waveforms, an exemplary case of lengthy sequences with multiscale structure. 

Recurrent neural networks and their linear variants (e.g., state-space models) also explicitly maintain an memory of the past.  
Models like S4~\citep{gu2021efficiently} are particularly relevant since they can be implemented as convolutions thanks to the linear recurrence. 
However, the convolution kernel for such models is as long as the input sequence, and efficiently computing these kernels relies on sophisticated parameterization and approximation techniques.
Although this issue can be circumvented by some recent advances~\citep{gupta2022diagonal,guparameterization,smith2022simplified}, initializing these models still requires special effort. 
The initialization mechanism shared by many of them, called HiPPO~\citep{gu2020hippo}, aims to memorize historical data via projection to orthogonal polynomials. 
Although this is related to  wavelets (i.e., a different basis in function space), we see in \cref{tab:ablation} that our method is insensitive to initialization---standard initialization schemes~\citep{glorot2010understanding} perform equally well as %
wavelet initialization. Some empirical motivation for the benefits of the wavelet basis over the HiPPO polynomial basis is given in \cref{app:other-exps} for an audio signal reconstruction task.

\section{Experiments}

We empirically test our model on classification and density estimation tasks that involve images, symbolic sequences, and physiological signals. 
Unless otherwise mentioned, we used standard Xavier uniform initialization for our \ourconv\ filters. 
All experimental details are presented in \cref{app:exp-details}. 
PyTorch code can be found in \url{https://github.com/thjashin/multires-conv}.

\begin{table}[t]
    \centering
    \vspace{-0.08in}
    \caption{Performance of pixel-level sequential classification on the sCIFAR dataset. %
        Bold indicates the best performing model and underline the second best. Results are taken from either the citation or \citet{hasani2022liquid}.}
    \label{tab:scifar}
        \vskip 0.1in
        \begin{small}
            \begin{tabular}{lccc}
                \toprule
                Model & Accuracy (\%) \\
                \midrule
                \emph{Attention}: & \\
                Transformer \citep{trinh2018learning} & 62.2 \\
                \midrule
                \emph{RNN}: & \\
                LSTM \citep{hochreiter1997long} & 63.01 \\
                r-LSTM \citep{trinh2018learning} & 72.2 \\
                UR-GRU \citep{gu2020improving} & 74.4 \\
                HiPPO-RNN \citep{gu2020hippo} & 61.1 \\
                LipschitzRNN \citep{erichson2021lipschitz} & 64.2 \\
                \midrule
                \emph{State Space Models}: & \\
                S4 \citep{gu2021efficiently} & 91.80 \\
                S4D \citep{guparameterization} & 90.69 \\
                S5 \citep{smith2022simplified} &  90.10 \\
                Liquid-S4 \citep{hasani2022liquid} & \underline{92.02} \\
                \midrule
                \emph{Convolution}: & \\
                TrellisNet \citep{bai2019trellis} & 73.42 \\
                CKConv \citep{romerockconv} & 63.74 \\
                FlexConv \citep{romero2022flexconv} & 80.82 \\
                \ourmodel~(Ours) & \textbf{93.15} \\
                \bottomrule
            \end{tabular}
        \end{small} 
\end{table}

\subsection{Pixel-level sequential image classification}
\label{sec:exp-sCIFAR}

We first consider image classification tasks where images are treated as a 1D sequence of pixels. 
The models are not allowed to use any 2D bias from the  image. 
Therefore, the model must be able to capture patterns at multiple different timescales, including pixels that are near in the original image but far from each other in its sequence representation. 

We evaluate our model on the Sequential CIFAR-10 dataset, which has long been used as a standard benchmark for modeling long-range dependencies in RNNs. 
We use the standard train and test split of the CIFAR-10 dataset and leave out 10\% of the training set as the validation set. 
For the classification task, we perform a mean-pooling of all timesteps on the output sequences ($\mathbf{y}_{[\text{batchsize},256, N]}\, \rightarrow \, \overline{\mathbf{y}}_{[\text{batchsize},256]}$) and pass the result into a fully-connected layer to generate the class logits. 
We report the test accuracy from the model that has the highest validation accuracy. 

The results are reported in \Cref{tab:scifar}. 
As we see, our model yields state-of-the-art performance (best test accuracy) on this benchmark sequence classification task, outperforming many recent strong competitors including transformers~\citep{vaswani2017attention}, RNNs, state space models, and other convolutional models. 
In particular, \ourmodel\ outperforms previous purely convolution-based models by more than 10 percentage points. 

Surprisingly, our model achieves this new performance benchmark with an almost embarrassingly simple architecture comprised primarily of dilated causal convolutions with length-2 filters shared between tree levels, and levels connected by linear links. In particular, our best model uses 10 \textsc{MultiresBlock}s, each containing a \textsc{MultiresConv} layer that amounts to 10 layers of tied-weight dilated causal convolutions. 
Together, that amounts to 100 layers of dilated causal convolutions, but the total number of parameters is only 1.4M. 
In contrast, the best S4 model uses around 7.9M parameters~\citep{gu2021efficiently}, but still underperforms our 5x smaller model (and likewise relies on fancy initializations).

\subsection{Hierarchical reasoning on symbolic sequences}
\label{sec:listops}

\begin{table}[t] 
    \centering
    \vspace{-0.08in}
    \caption{Performance of predicting outcomes of list operations in the long ListOps dataset of~\citet{tay2021long}.  Bold indicates the best-performing model and underlines the second best. *SGConv is built to mimic the global convolution interpretation of S4. Results are taken from either the citation or \citet{hasani2022liquid}.}
     \label{tab:listops}
    \vskip 0.1in
    \begin{small}
\begin{tabular}{lc}
    \toprule
    Model & Accuracy (\%)  \\
    \midrule
    \emph{Attention}: & \\
    Local Attention \citep{tay2021long} & 15.82 \\
    Linear Trans. \citep{katharopoulos2020transformers}& 16.13  \\
    Linformer \citep{wang2020linformer} & 16.13 \\
    Sparse Transformer \citep{child2019generating} & 17.07 \\
    Performer  \citep{choromanski2020rethinking} & 18.01  \\
    Transformer \citep{vaswani2017attention} & 36.37  \\
    Sinkhorn Transformer \citep{tay2020sparse} & 33.67  \\
    FNet \citep{lee2022fnet} & 35.33  \\
    Longformer \citep{beltagy2020longformer}  & 35.63 \\
    BigBird \citep{zaheer2020big} & 36.05	 \\
    Nyströmformer \citep{xiong2021nystromformer} & 37.15  \\
    Luna-256 \cite{ma2021luna} & 37.25  \\
    Reformer \citep{kitaev2020reformer} & 37.27  \\
    H-Transformer-1D \citep{zhu2021h} & 49.53 \\
    \midrule
    \emph{State Space Models}: & \\
    S4 \citep{guparameterization} & 59.60  \\
    DSS \citep{gupta2022diagonal} & 57.6  \\
    S4D \citep{guparameterization} & 60.52  \\
    S5 \citep{smith2022simplified} & \underline{62.15} \\
    Liquid-S4~\citep{hasani2022liquid} & \textbf{62.75}  \\
    \midrule
    \emph{Convolution}: & \\
    CDIL \citep{cheng2023classification} & 44.05 \\
    SGConv* \citep{li2022makes} & 61.45 \\
    \ourmodel\ (Ours) & \textbf{62.75} \\
    \bottomrule
\end{tabular}
    \end{small}
\end{table}

\begin{table*}[h]
    \vspace{-0.08in}
    \caption{AUROC for ECG multi-label/multi-class classification on the PTB-XL dataset. Bold indicates the
        best performing model and underline the second best. Results for other models taken from \citet{zhang2023effectively} and \citet{Strodthoff2021DeepLearningECG}.}
    \vskip 0.1in
    \centering
    \begin{small}
        \begin{tabular}{@{}lcccccc@{}}
            \toprule
            Model (AUROC) & All & Diag & Sub-diag & Super-diag & Form & Rhythm\\
            \midrule
            \ourmodel \ (Ours)& \textbf{0.938} & \underline{0.939} & \textbf{0.934} & \textbf{0.934} & \underline{0.897} & \underline{0.975} \\
            Spacetime \citep{zhang2023effectively} & \underline{0.936} & \textbf{0.941} & \underline{0.933} & 0.929 & 0.883 & 0.967 \\
            S4 \citep{gu2021efficiently} & \textbf{0.938} & \underline{0.939} & 0.929 & \underline{0.931} & 0.895 & \textbf{0.977} \\
            InceptionTime \citep{IsmailFawaz2020InceptionTimeFindingAlexNet} & 0.925 & 0.931 & 0.930 & 0.921 & \textbf{0.899} & 0.953 \\
            LSTM \citep{hochreiter1997long} & 0.907 & 0.927 & 0.928 & 0.927 & 0.851 & 0.953 \\
            Transformer \citep{vaswani2017attention} & 0.857 & 0.876 & 0.882 & 0.887 & 0.771 & 0.831 \\
            Wavelet features \citep{Strodthoff2021DeepLearningECG} & 0.849 & 0.855 & 0.859 & 0.874 & 0.757 & 0.890 \\
            \bottomrule
        \end{tabular}
        \label{tab:ptbxl}
    \end{small}
\end{table*}

In order to test our model's capability of reasoning about hierarchical structures, we conduct experiments on the long ListOps dataset from \citet{tay2021long}. 
The dataset consists of sequences that represent a composition of multiple list operations including \texttt{MAX, MEAN, MED} (median) and \texttt{SM}~(sum and mod). 
For example, the input can be
\begin{align*}
\texttt{[MAX 1 [MAX 2 3 ] 5 6 [MIN 7 8 ] ]}.  
\end{align*}
The output in this case is 7. 
This is a far shorter version of the examples in the dataset which have length up to 2048. 
The prediction is formulated as a ten-way classification problem. 
We use the train, validation, and test split specified by \citet{tay2021long}. 
Following the practice in prior work, we pad the sequences to maximum length (2048) to form minibatches, and average over only the actual inputs when pooling over output sequences. Our model for this experiment has 12 \textsc{MultiresBlock}s. 
We found a larger filter size (4) is beneficial for this task~\footnote{This corresponds to switching from the tree structure of the Haar DWT to those of more general wavelets, such as the Daubechies wavelets~\citep{akansu2001multiresolution}.}. 

Results are reported in \Cref{tab:listops}. 
\ourmodel\ achieves the overall %
best result, outperforming all attention-based models, S4 and its diagonal variants. 
It is only %
matched by Liquid-S4 which additionally introduced inner-product structures between inputs into state space models. 

\ourmodel\ is the first model with small kernel convolutions to achieve competitive performance with state space models on the long ListOps dataset. 
Although SGConv~\citep{li2022makes} is advertised as a convolutional neural network, the model is built is to mimic the global convolution interpretation of S4. 
Therefore, it shares the same limitation of a sophisticated parameterization and relies on an FFT for computing the large kernel convolution.

\subsection{Classifying physiological signals}
PTB-XL \citep{Wagner2020PTBXLLargePublicly} is a publicly available dataset of electrocardiogram (ECG) time series. %
The dataset contains 21,837 12-lead ECG recordings for 10 seconds from 18,885 patients.
Each recording is labeled with at least one of the 71 ECG labels from the SCP-ECG standard.
The dataset is partitioned into six subsets: ``all'', ``diagnostic'', ``diagnostic subclass'', ``diagnostic superclass'', ``form'', and ``rhythm'', each containing a different subset of the full 71 labels.
``Diagnostic superclass'' is a multi-class classification task while the others are multi-label classification tasks.

We use the 100Hz version of the dataset, where each time series has 1K timesteps and 12 channels.
We use the train, validation, and test split specified by \citet{Strodthoff2021DeepLearningECG}. 
We transferred the architecture and learning rate from our Sequential CIFAR-10 experiments. 
We tuned the dropout rate on the validation set of ``diagnostic superclass'' and used the same setting for all subsets afterwards. 

We show the results in \cref{tab:ptbxl}.
On all six subsets of PTB-XL, our model either attains %
the best or second-best AUROC out of all models presented.
It significantly outperforms a neural network trained on fixed wavelet features~\citep{Strodthoff2021DeepLearningECG}, 
showing the benefits of combining multiresolution computation with end-to-end learning.

\begin{table}[t]
    \centering
    \vspace{-0.08in}
    \caption{Autoregressive generative modeling on CIFAR-10 dataset. Results are reported as bits per dimension (bpd) (lower is better).  
        Bold indicates the best-performing model and underlines the second best. Results are taken from the citation.}
    \label{tab:ar-cifar}
        \setlength{\tabcolsep}{5pt}
        \vskip 0.1in
        \begin{small}
            \begin{tabular}{lcc}
                \toprule
                Model & \#params & Test bpd. \\
                \midrule
                \emph{RNN + 2D bias}: \\
                PixelRNN \citep{van2016pixel} &  & 3.00 \\
                \midrule
                \emph{2D Convolution:} \\
                PixelCNN \citep{van2016pixel} & & 3.14 \\
                Gated PixelCNN \citep{van2016conditional} &  & 3.03 \\
                PixelCNN++ \citep{salimans2017pixelcnn} & 53M  & 2.92 \\
                \midrule
                \emph{2D Convolution + Attention:} \\
                PixelSNAIL \citep{chen2018pixelsnail} & 46M & 2.85 \\
                \midrule
                \emph{1D Attention:} & & \\
                Image Trans. \citep{trinh2018learning} &  & 2.90 \\
                \emph{2D Attention:} & & \\
                Sparse Trans. \citep{child2019generating} & 59M  & \textbf{2.80} \\
                \midrule
                \emph{State-Space Models + U-Net structure:} & & \\
                S4 \citep{gu2021efficiently} &  & 2.85 \\
                \midrule
                \emph{Convolution (no 2D bias)}: \\
                \ourmodel~(Ours) & 38M &  \underline{2.84} \\
                \bottomrule
            \end{tabular}
        \end{small}\vspace{-0.1cm}
\end{table}

\subsection{Autoregressive generative modeling}

In this experiment, we go beyond classification and evaluate our sequence modeling layer on autoregressive generative modeling for CIFAR-10. 
A detailed description of the settings is provided in \cref{app:ar-details}. 
We used 48 \ourlayer s with 512 channels and filter size 4. 
As the nature of task requires more parameters than the former experiments, we added an additional $1\times 1$ convolutional layer after the GLU activation in the residual block. 

\begin{table*}[h]
    \vspace{-0.08in}
    \caption{Results of ablation study performed on the long ListOps dataset. We used a smaller model than the one in \cref{sec:listops} and report the average performance over three random seeds. }
    \label{tab:ablation}
    \vskip 0.1in
    \centering
    \begin{small}
        \begin{tabular}{@{}lcccccr}
            \toprule
            ID & Filters & Initialization & Memory mechanism & Filter size & MultiresConv depth & Accuracy \\
            \midrule
            1 & Fixed & wavelet & Fading &  2  & 7 & 50.15$\pm$0.60 \\
            2 & Trainable & wavelet & Fading & 2 & 7 & 52.08$\pm$0.35 \\
            3 & Trainable & Xavier & Fading & 2 &  7 & 51.70$\pm$0.16 \\
            4 & Trainable & Xavier & Fading & 2 & 11 & 61.07$\pm$0.26 \\
            5 & Trainable & Xavier & Uniform & 2 & 7 & 51.58$\pm$0.57 \\
            6 & Trainable & wavelet & Fading & 4 & 7 & 59.23$\pm$0.13 \\
            \bottomrule
        \end{tabular}
    \end{small}
\end{table*}

We hold out 2K examples from the training set for validation and report test negative log-likelihood (in bits per dimension) with best validation performance.
We benchmark our results with a number of previously reported results %
in \cref{tab:ar-cifar}. 
With no 2D bias and significantly fewer parameters, our model achieved 2.838 bits per dimension on the test set, outperforming 2D convolution-based models by a large margin and even surpasses their improved variant (PixelSNAIL) that leverages both convolution and attention. 
Our model also outperforms S4, which did not report details of its architecture, but mentioned they used downsampling and a U-Net structure. 
Our model has no downsampling or U-Net structure, demonstrating its memory efficiency and the ability to capture long-range dependencies. 
It also beats Image Transformer with 1D attention, and is only outperformed by Sparse Transformer, which additionally leveraged 2D biases and used significantly more parameters. 

\subsection{Ablation study}
\label{sec:exp-ablation}

We reuse the long ListOps dataset from \cref{sec:listops} and conduct six experiments to study the effect of model architecture, hyperparameters and initialization schemes. 
Each experiment consists of three independent runs with different random seeds.
The results are reported in \Cref{tab:ablation}. 

\prepar
\paragraph{Fixed wavelet filters vs. trainable filters. } By comparing Experiment 1 and 2, we can see that training the wavelet filters improves the performance, which justifies our design choice of decoupling the \textsc{MultiresConv} operation from the wavelet transform. 

\prepar
\paragraph{Sensitivity to initialization. } 
Next, we examine the effect of initializing the filters as wavelet filters. 
From Experiment 2 to 3, we switched from wavelet initialization to standard Xavier initialization of neural networks~\citep{glorot2010understanding}, fixing all the other parts of the model. 
We do not observe statistically significant differences between the approaches. 
This demonstrates the advantage of our model over S4-related methods that require careful initialization. 

\prepar
\paragraph{Memory mechanism. } We do not notice a statistically significant difference between uniform and resolution fading, though resolution fading provides a simpler implementation.

\prepar
\paragraph{Importance of receptive fields. }
Finally, we show that we can significantly improve the performance of this model by increasing either the filter size (Experiment 2 vs. 6) or the depth of the $\textsc{MultiresConv}$ (Experiment 3 vs. 4). 
We believe this is because both changes increase the receptive field size of the $\textsc{MultiresConv}$ operation, which is particularly important for reasoning tasks like ListOps.

\section{Conclusion}
We presented \ourlayer\ for robust and efficient memorization of long-term patterns in sequential data sources.  
It takes inspiration from the multiresolution analysis (MRA) literature, building on wavelet decompositions, to memorize patterns occurring at multiple timescales.  In particular, our memory is generated by \emph{multiresolution convolution}s, %
implemented as dilated causal convolutions with \emph{learned} filters shared between tree levels that are connected via purely linear operations.  To create the memory, all multiresolution values may be maintained, or more emphasis can be placed on more recent time points by leveraging the time-localized nature of wavelet transforms.

The resulting \ourmodel\ garners the computational advantages of convolutional networks while being defined by dramatically fewer parameters than competitor models, all while achieving state-of-the-art performance in a number of benchmark sequence modeling tasks.  
These experiments demonstrate the portability of our multiresolution memory structure to a number of tasks, even in cases where a given task may not intuitively be viewed in a multiscale fashion (e.g., syntax tree parsing in ListOps).

By taking inspiration from the wavelet literature, we built an effective convolutional layer with dramatically fewer parameters without taking a performance hit.  The principled underpinnings of the \textsc{MultiresConv} ensure it possesses a configuration with strong reconstruction capabilities (e.g., when our filters equal the wavelet filters); however, as we showed, predictive performance can be improved by learning the filters.

Another potential benefit of starting from the wavelet framework is the ability to leverage that vast literature in that domain for future modeling advances.  In particular, we plan to explore the utility of \ourconv\ in representation learning %
and long-term forecasting. For representation learning, we can consider the structure of \emph{shift-invariant wavelet transforms}~\cite{kingsbury1998dual,selesnick2005dual} to target representations that are invariant to shifts of the input signals.  For example, we may want to cluster individuals with similar ECG signals even if the key signatures are shifted relative to one another.  Wavelets may also be extended to image analysis, enabling video analysis in our sequential setting.

\section*{Acknowledgements}

This work was supported in part by AFOSR Grant FA9550-21-1-0397, ONR Grant N00014-22-1-2110, the National Science Foundation under grant 2205084, and the Stanford Institute for Human-Centered Artificial Intelligence (HAI). EBF is a Chan Zuckerberg Biohub – San Francisco Investigator.
KAW was partially supported by Stanford Data Science as a Stanford Data Science Scholar.

\bibliography{ref}
\bibliographystyle{icml2023}

\clearpage
\appendix
\onecolumn

\section{Background: Wavelet Theory}
\label{app:wavelets}

In this section, we provide a formal treatment of the background on wavelet theory, expanding upon the introduction presented in \cref{sec:wavelet-background}.
We begin by going through the derivation of Haar wavelets using the multiresolution analysis (MRA) framework. 
We then demonstrate how the same framework can be leveraged to derive a wide array of orthogonal wavelets.   

\subsection{Haar wavelets}
\label{app:haar-wavelets}

We first introduce the definition of Haar \emph{scaling function}. 
The scaling function is also known as the mother wavelet.

\begin{definition}[Haar scaling function]
The Haar scaling function is defined as
\begin{align*}
    \phi(t) = \mathbf{1}(0 \leq t < 1).
\end{align*}
\end{definition}
Let $V_0$ be the space spanned by translations of the scaling function:
\begin{align*}
    V_0 \triangleq \left\{s_0\middle | s_0(t) = \sum_{k\in \mathbb{Z}} a_{0,k} \phi(t - k), a_{0,k} \in \reals\right\}. 
\end{align*}
It contains functions that are constant over intervals of length 1.
Next, we construct a larger space $V_1$ that subsumes $V_0$ by dividing the unit interval into halves:
\begin{align*}
    V_1\triangleq \left\{s_1\middle | s_1(t) = \sum_{k \in \mathbb{Z}} a_{1,k} \sqrt{2}\phi(2t - k), a_{1,k} \in \reals\right\}. 
\end{align*}
The coefficient $\sqrt{2}$ is introduced such that the basis function $\sqrt{2}\phi(2t - k)$ is normalized, i.e., $\int_{\reals} (\sqrt{2}\phi(2t - k))^2 \;dt = 1$. 
These functions are constant over intervals of length $1/2$. 
We could repeat such a process and create a sequence of function spaces for $j \geq 0$:
\begin{align} \label{eq:vj}
    V_j\triangleq \left\{s_j\middle | s_1(t) = \sum_{k \in \mathbb{Z}} a_{j,k} \phi_{j,k}(t), a_{j,k} \in \reals\right\}, \text{ where } \phi_{j,k}(t) \triangleq 2^{j/2} \phi(2^j t - k). 
\end{align}
These function spaces come with increasing richness and expressive power:
\begin{align*}
    V_0 \subset V_1 \subset V_2 \subset \dots \subset V_{j}. 
\end{align*}
It is easy to see that $V_j$ with a sufficiently large $j$ will have the capacity to approximate any continuous time series $f$ with arbitrary precision. 
Let $\hat{f}^{(j)}(t) = \sum_{k\in \mathbb{Z}}a_{j,k}\phi_{j,k}(t)$ denote the approximation of $f$ in $V_j$. 
The best approximation is given by the coefficients
\begin{align} \label{eq:ajk}
    a_{j,k} = \int_{\reals} f(t) \phi_{j,k}(t)\; dt =  
    \int_{k/2^j}^{(k+1)/2^j} f(t)\; dt.  %
\end{align}
One can represent $f$ with the vector of coefficients $\{a_{j,k}\}_{k\in \mathbb{Z}}$. 
However, as we have explained in \cref{sec:wavelet-background}, each individual coefficient is too localized to capture patterns over larger intervals. 
One way to remedy this is applying MRA to generate representations that summarize patterns of $f$ at multiple different timescales, resulting in Haar wavelets.

The wavelet MRA has the following steps. 
First, observing that $V_{j-1} \subset V_j$, we decompose $V_j$ into the sum of $V_{j-1}$ and its orthogonal complement $W_{j-1}$. 
Repeating this process, we get  
\begin{align} \label{eq:vj-decomp}
    V_j = V_{j-1} \oplus	 W_{j - 1} = V_0 \oplus W_0 \oplus W_1 \oplus \dots \oplus W_{j-1}.  
\end{align}
Because the functions in $V_j$ and $V_{j-1}$ represent structures at timescales coarser than $1/2^j$ and $1/2^{j-1}$, respectively, 
we can expect the basis functions in $W_{j-1}$ uniquely represent structures at the $1/2^j$ timescale. 
The next step is to find an orthonormal basis for $W_{j-1}$. 
Theorem \ref{thm:wavelet} shows that this basis 
can be obtained from rescaling and translations of a function $\psi$, the collection of functions known as the \emph{Haar wavelets}. 

\begin{definition}[Haar wavelets] \label{def:haar-wavelets}
    The Haar wavelets are a family of functions defined as
    \begin{align}
        \psi_{j,k}(t) 
        =  2^{j/2}\psi(2^jt - k), \text{ where }\psi(t) \defeq \phi(2t) - \phi(2t - 1).
    \end{align}
\end{definition}

\begin{theorem}[Haar wavelets as orthonormal bases] \label{thm:wavelet}
    Let $V_j$ be defined as in \eqref{eq:vj} and $W_{j-1}$ be the orthogonal complement of $V_{j-1}$ in $V_j$. 
    Then, $\{\psi_{j,k}\}_{k\in \mathbb{Z}}$ is an orthonormal basis for $W_j$. 
\end{theorem}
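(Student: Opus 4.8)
The plan is to verify the three defining properties of an orthonormal basis for the complement space (orthogonality to the lower-resolution space, orthonormality of the system, and completeness), reducing everything to elementary computations with the indicator $\phi$. Throughout I would work in $L^2(\mathbb R)$, so that $V_j$, $W_{j-1}$, ``$\oplus$'' and ``orthogonal complement'' are understood in the Hilbert-space sense with closed linear spans and $\ell^2$ coefficient sequences; I would also align the indices, proving the statement for $W_{j-1}$ and $\{\psi_{j-1,k}\}_{k\in\mathbb Z}$ (equivalently $W_j$, $\{\psi_{j,k}\}_k$ after shifting $j$). The single computational input is the pair of two-scale identities
\begin{align*}
\phi_{j-1,k} = \tfrac{1}{\sqrt 2}\big(\phi_{j,2k} + \phi_{j,2k+1}\big), \qquad \psi_{j-1,k} = \tfrac{1}{\sqrt 2}\big(\phi_{j,2k} - \phi_{j,2k+1}\big),
\end{align*}
which follow by substituting $\phi(t)=\mathbf 1(0\le t<1)$ and $\psi(t)=\phi(2t)-\phi(2t-1)$ into $\phi_{j,k}=2^{j/2}\phi(2^jt-k)$ and tracking the dyadic rescaling factors. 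In particular these show $\psi_{j-1,k}\in V_j$.

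From the two-scale identities, $\{\phi_{j-1,k}\}_k\cup\{\psi_{j-1,k}\}_k$ is the image of the orthonormal system $\{\phi_{j,m}\}_m$ under a block-diagonal orthogonal change of basis (on the pair indexed by $2k,2k+1$ the block is the normalized $2\times 2$ Hadamard matrix), so it is itself an orthonormal basis of $V_j$. I would then check $\psi_{j-1,k}\perp V_{j-1}$ by computing $\inner{\psi_{j-1,k}}{\phi_{j-1,\ell}}$: for $k\ne\ell$ the supports are disjoint dyadic intervals of length $2^{-(j-1)}$, so the integral is $0$; for $k=\ell$, $\phi_{j-1,k}$ is constant on $\operatorname{supp}\psi_{j-1,k}$ and $\int\psi_{j-1,k}=0$, so again $0$. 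Orthonormality of the $\{\psi_{j-1,k}\}_k$ themselves is the same bookkeeping: disjoint supports for $k\ne\ell$, and $\int\psi_{j-1,k}^2 = \int\psi^2 = 1$ after the substitution $u=2^{j-1}t-k$, the amplitude factor $2^{j-1}$ and the Jacobian $2^{-(j-1)}$ cancelling.

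I expect the only real subtlety to be completeness — that $\{\psi_{j-1,k}\}_k$ spans \emph{all} of $W_{j-1}$ rather than a proper subspace of it. The cleanest route, which I would take, is precisely the block-orthogonal change-of-basis fact above: since $\{\phi_{j-1,k}\}_k\cup\{\psi_{j-1,k}\}_k$ is an orthonormal basis of the whole space $V_j$ while $\{\phi_{j-1,k}\}_k$ alone is an orthonormal basis of $V_{j-1}$, Parseval forces $\overline{\mathrm{span}}\{\psi_{j-1,k}\}_k$ to be exactly the orthogonal complement of $V_{j-1}$ in $V_j$, i.e.\ $W_{j-1}$. This simultaneously delivers the decomposition $V_j=V_{j-1}\oplus W_{j-1}$ used in \eqref{eq:vj-decomp}. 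Everything else is routine support/scaling accounting, so the write-up should be short once the two-scale identities and the $L^2$ conventions are stated up front.
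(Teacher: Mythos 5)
Your proposal is correct, and its overall skeleton (two-scale identities, orthogonality to the coarser space, then completeness) matches the paper's proof; the interesting divergence is in how completeness is closed out. The paper argues concretely: it takes an arbitrary $s_{j+1}=\sum_k a_{j+1,k}\phi_{j+1,k}\in W_j$, computes $\inner{s_{j+1}}{\phi_{j,k'}}=\tfrac{1}{\sqrt 2}(a_{j+1,2k'}+a_{j+1,2k'+1})=0$, and reads off the constraint $a_{j+1,2k+1}=-a_{j+1,2k}$, which lets it rewrite $s_{j+1}$ explicitly as $\sum_k\sqrt 2\,a_{j+1,2k}\psi_{j,k}$. You instead observe that $\{\phi_{j-1,k}\}_k\cup\{\psi_{j-1,k}\}_k$ is the image of the orthonormal basis $\{\phi_{j,m}\}_m$ under a block-diagonal orthogonal map ($2\times 2$ normalized Hadamard blocks) and invoke Parseval to conclude that the $\psi$'s span exactly $V_j\ominus V_{j-1}$. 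The two arguments are two faces of the same Hadamard structure, but yours buys something real: it is rigorous about the $L^2$ setting, closed spans, and $\ell^2$ coefficient sequences (which the paper glosses over --- its infinite sums and ``orthogonal complement'' are only meaningful with the conventions you state up front), and the unitary-image-of-an-ONB step generalizes verbatim to non-Haar orthogonal filters where supports overlap and the change of basis is no longer block-diagonal. The paper's version buys explicitness --- the reader sees exactly which linear combination of $\psi_{j,k}$'s reproduces a given element of $W_j$, which foreshadows the DWT recursion in the following subsection. Your support-disjointness check of $\psi_{j-1,k}\perp\phi_{j-1,\ell}$ is also a minor (valid) variation on the paper's inner-product expansion, and you are right to flag and repair the index mismatch between $W_{j-1}$ in the hypothesis and $W_j$ in the conclusion of the theorem statement.
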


\begin{proof}
    We first notice that $\psi_{j,k} \in V_{j+1}$ and $\inner{\psi_{j,k}}{\psi_{j',k'}} = \delta_{j,j'}\delta_{k,k'}$. 
    To show the result, we need to prove (1) $\psi_{j,k} \in W_j$, i.e., $\psi_{j,k}$ is orthogonal to any function in $V_j$, and (2) any function in $W_j$ can be represented with $\{\psi_{j,k}\}_{k\in \mathbb{Z}}$. 
    For the first requirement, we have that, for any $k'\in \mathbb{Z}$,
    \begin{align*}
        \inner{\psi_{j,k}}{\phi_{j, k'}} = \inner{\frac{1}{\sqrt{2}}(\phi_{j+1, 2k} - \phi_{j+1, 2k+1})}{\frac{1}{\sqrt{2}}(\phi_{j+1, 2k'} + \phi_{j+1, 2k'+1})} = \frac{1}{2}(\delta_{k,k'} - \delta_{k,k'}) = 0. 
    \end{align*}
    Therefore, $\psi_{j,k}$ is orthogonal to any function in $V_j$, which implies $\psi_{j,k} \in W_j$. 
To prove (2), we notice that for some $s_{j+1} = \sum_{k\in \mathbb{Z}} a_{j+1,k} \phi_{j+1,k} \in V_{j+1}$, $s_{j+1} \in W_j$ implies for any $k'\in \mathbb{Z}$,
\begin{align*}
    \inner{s_{j+1}}{\phi_{j,k'}} &= \sum_{k\in \mathbb{Z}} a_{j+1,k} \inner{\phi_{j+1,k}}{\phi_{j, k'}} \\
    &= \sum_{k\in \mathbb{Z}} a_{j+1,k} \inner{\phi_{j+1,k}}{\frac{1}{\sqrt{2}}(\phi_{j+1,2k'} + \phi_{j+1, 2k' + 1})} \\
    &= \frac{1}{\sqrt{2}}(a_{j + 1,2k'} + a_{j + 1,2k' + 1}) = 0.
\end{align*}
Therefore, we can write any $s_{j+1} \in W_j$ using $a_{j+1,2k+1} = -a_{j+1,2k}$ as 
\begin{align*} 
    s_{j+1} = \sum_{k\in \mathbb{Z}} a_{j+1,2k} (\phi_{j+1, 2k} - \phi_{j+1, 2k + 1}) = \sum_{k\in \mathbb{Z}} \sqrt{2} a_{j+1,2k} \psi_{j,k}. 
\end{align*}
Thus, any function in $W_j$ can be represented with $\{\psi_{j,k}\}_{k\in \mathbb{Z}}$. 
Combining (1) and (2) proves the original statement. 
\end{proof}

Now that we have the wavelet basis, we can rewrite the approximation of $f$ in $V_j$ to reflect the decomposition in \eqref{eq:vj-decomp}:
\begin{align} \label{eq:wavelet-expansion}
    f(t) \approx \hat{f}^{(j)}(t) = \sum_{k\in \mathbb{Z}} a_{0,k} \phi(t - k) + \sum_{j'=0}^{j-1} \sum_{k'\in \mathbb Z} b_{j',k'} \psi_{j',k'}(t), 
\end{align}
where $\{\phi(t - k)\}_{k\in \mathbb{Z}}$ is the basis of $V_0$ and $\{\psi_{j',k'}\}_{k'\in \mathbb{Z}}$ is the basis of $W_{j'}$ for $j' = 0, 1, \cdots, j - 1$. 
Therefore, we can summarize the multiresolution structure of $f$ using the coefficients $\{a_{0,k}\}_{k\in \mathbb{Z}}$ and $\{b_{0,k}\}_{k\in \mathbb{Z}} \cup \cdots \cup \{b_{j-1,k}\}_{k\in \mathbb{Z}}$.

\subsection{Generalization to all orthogonal wavelets}
\label{app:orth-wavelets}

The multiresolution analysis framework described in the previous section can be generalized beyond Haar wavelets
to all orthogonal wavelets. 
The starting point is that we can use a different scaling function than the Haar scaling function that satisfies the following equation:
\begin{align} \label{eq:orth-wavelet-father}
    \phi(t) = \sum_{k=0}^{K-1}\mathbf h_0(k)\sqrt{2}\phi(2t - k).
\end{align}
This condition guarantees that the associated function spaces satisfy $V_{j-1} \subset V_j$. 
The sequence $\mathbf h_0(\cdot) \in \mathbb{R}^K$ here is referred to as a (low-pass) \emph{filter}, with potentially different lengths for the various wavelets.
For example, the Haar wavelet filter is a 2D vector with $\mathbf h_0(0) = \mathbf h_0(1) = \frac{1}{\sqrt{2}}$. 
The Daubechies-4 wavelet~\citep{daubechies1988orthonormal} filter has 4 elements:
\begin{align*}
\left(\frac{1 + \sqrt{3}}{4\sqrt{2}}, \frac{3 + \sqrt{3}}{4\sqrt{2}}, \frac{3 - \sqrt{3}}{4\sqrt{2}}, \frac{1 - \sqrt{3}}{4\sqrt{2}}\right). 
\end{align*}
Similarly, we have the following generalized form of
the mother wavelet:
\begin{align} \label{eq:orth-wavelet-mother}
    \psi(t) = \sum_{k=0}^{K-1}\mathbf h_1(k)\sqrt{2}\phi(2t - k),
\end{align}
where $\mathbf h_1(\cdot)$ is known as the \emph{high-pass} filter.
Using the orthogonality between $\phi(t-k)$ and $\psi(t)$ for any $k$, one can show~\citep[see, e.g.,][]{strang1996wavelets}: 
\begin{align*}
    \mathbf h_1(k) = (-1)^k \mathbf h_0(K - 1-k),
\end{align*}
where $N$ is the length of the filter $\mathbf h_0$. %
For Haar wavelets, $\mathbf h_1(0) = \mathbf h_0(1) = \frac{1}{\sqrt{2}}$, $\mathbf h_1(1) = -\mathbf h_0(0) = -\frac{1}{\sqrt{2}}$. 
For Daubechies-4 wavelets, the corresponding $\mathbf h_1$ filter is
\begin{align*}
    \left(\frac{1 - \sqrt{3}}{4\sqrt{2}}, -\frac{3 - \sqrt{3}}{4\sqrt{2}}, \frac{3 + \sqrt{3}}{4\sqrt{2}}, -\frac{1 + \sqrt{3}}{4\sqrt{2}}\right). 
\end{align*}
In this work, except for ablation purposes, we do not fix the values of the filters ($\mathbf h_0, \mathbf h_1$) or enforce coupling between them.
Instead, we allow both filters to be learned via gradient descent. 
We also found that initializing the filters with Haar or Daubechies wavelets does not provide advantages over standard Xavier initialization.
This demonstrates the ability of the model to discover effective parameterizations through optimization.

\subsection{Discrete wavelet transform}
\label{app:dwt}

We have already shown that a continuous time series can be represented with a vector of coefficients that capture its structure at multiple different timescales. 
The remaining challenge is determining these coefficients from a discretely sampled signal $\mathbf x \in \mathbb{R}^N$. 
From \eqref{eq:ajk}, we know that each coefficient $a_{j,k}$ summarizes the average value of the time series in a $1/2^j$ interval. 
Therefore, if we choose a sufficiently large $J$, the coefficients $\mathbf{a}_{J}$ can be directly approximated with the discretely sampled $\mathbf x$. 
We can obtain the coefficients needed for the wavelet expansion \eqref{eq:wavelet-expansion} recursively from $\mathbf{a}_{J}$ using a process known as the discrete wavelet transform (DWT).

To derive the DWT recursion, we start from $s_{j+1} = \sum_{m} a_{j+1, m}\phi_{j+1,m} \in V_{j+1}$ and compute the coefficents for its decomposition in $V_j$ and $W_j$:
\begin{align*}
    s_{j+1} = \sum_{n} a_{j,n} \phi_{j,n} + \sum_{\ell} b_{j, \ell}\psi_{j,\ell}. 
\end{align*}
Due to the orthogonality of $\{\phi_{j, n}\}_{n\in \mathbb{Z}}$ and $\{\psi_{j, n}\}_{n\in \mathbb{Z}}$, 
\begin{align*}
    a_{j, n} &= \inner{s_{j+1}}{\phi_{j,n}} \\
    &= \inner{\sum_m a_{j+1,m} \phi_{j+1,m}}{\sum_k \mathbf h_0(k) \phi_{j+1, 2n + k}} \\
    &= \sum_{k} a_{j+1, 2n + k}  \mathbf h_0(k), \text{ and similarly,}\\
    b_{j, \ell} &= \inner{s_{j+1}}{\psi_{j,\ell}} = \sum_{k} a_{j+1, 2\ell + k}  \mathbf h_1(k). 
\end{align*}
The DWT is repeating the above computations from $j=J$ to $0$. 
We can see each step of this process is a convolution between $\mathbf a_{j+1}$ and the filter ($\mathbf h_0(-\cdot)$ or $\mathbf h_1(-\cdot)$), and then applying a downsampler ($\mathbf y(n) = \mathbf x(2n)$). 
Note that although the convolution is done after a negation of the filter $\mathbf h_0, \mathbf h_1$, this is exactly equivalent to a 1-dimensional convolutional layer in neural networks with the convolution kernel $\mathbf h_0$ or $\mathbf h_1$, where the convolution operation is standard convolution plus negation of the filter.

\section{MultiresLayer Implementation}
\label{app:torch-impl}

We provide an example PyTorch implementation of a \ourlayer\ with the "resolution fading" TreeSelect strategy (see \cref{sec:memory}) in \Cref{fig:torch-impl}.

\begin{figure*}[h]
\centering
\begin{minted}[fontsize={\fontsize{8.5}{9.5}\selectfont}]{python}
import math
from torch.nn.functional import pad, conv1d    

def multires_layer(x, h0, h1, w, depth=None):
    """
    Args:
        x: input of shape (batch_size, n_channels, sequence_length).
        h0, h1: convolution filters of shape (n_channels, 1, filter_size).
        w: weights of the linear layer after TreeSelect. Shape: (n_channels, depth + 2).
        depth: depth of MultiresConv, i.e., J.
    
    Returns:
        y: output of shape (batch_size, n_channels, sequence_length).
    """
    kernel_size = self.h0.shape[-1]
    if depth is None:
        depth = math.ceil(math.log2((x.shape[-1] - 1) / (kernel_size - 1) + 1))
    y = 0.
    a = x
    dilation = 1
    for i in range(depth, 0, -1):
        a = pad(a, (dilation * (kernel_size - 1), 0), "constant", 0)
        b = conv1d(a, h1, dilation=dilation, groups=x.shape[1])
        a = conv1d(a, h0, dilation=dilation, groups=x.shape[1])
        y += w[:, i:i+1] * b
        dilation *= 2
    y += w[:, :1] * a 
    y += w[:, -1:] * x
    return y
\end{minted}
\caption{PyTorch code for implementing a \ourlayer\ with the "resolution fading" TreeSelect strategy. } \label{fig:torch-impl}
\end{figure*}

\section{Experiment Details}
\label{app:exp-details}

\subsection{Sequential CIFAR-10 classification}

The inputs are sequences of length 1024 with three channels corresponding to RGB values. 
We rescaled and centered the inputs between $[-1, 1]$, and used a $1\times 1$ convolutional layer to encode them into 256 channels.
For this task, we used 10 \ourblock s, 
each with filter size 2, 256 channels, and the resolution fading memory mechanism. 
They were followed by a mean-pooling over timesteps and a linear layer to generate classification logits. 
We trained the network for 250 epochs with batch size 50.
We used the AdamW optimizer~\citep{loshchilov2018decoupled} with default hyperparameters and a weight decay rate 0.01. 
We use a dropout rate 0.25, layer normalization and an initial learning rate 0.0045. 
The learning rate followed a cosine annealing procedure~\citep{loshchilov2017sgdr}.

\subsection{ListOps}

The inputs are sequences of token IDs (integers) with maximum length 2048. 
We padded all sequences to this maximum length and used an embedding layer to encode them into 128 channels. 
We used 10 \ourblock s, each with filter size 4, 128 channels, and the resolution fading memory mechanism. 
We performed mean-pooling only on timesteps that were actual inputs to generate classification logits. 
We trained the network for 100 epochs with batch size 50. 
We used the AdamW optimizer with a weight decay rate 0.03. 
The learning rate was set to 0.003 after 1 epoch of linear warmup and then followed by a cosine annealing. 
We used a dropout rate 0.1 and batch normalization instead of layer normalization in this experiment. 

\subsection{PTB-XL}

The inputs are time series with 1000 timesteps and 12 channels. 
We transferred the architecture and learning rate from our Sequential CIFAR-10 experiments. 
We used layer normalization, a dropout rate 0.2, and the AdamW optimizer with weight decay rate 0.06. 
We trained the network for 5 epochs of linear warmup followed by 95 epochs of cosine learning rates.

\subsection{Autoregressive generative modeling}
\label{app:ar-details}

CIFAR-10 images were reshaped into sequences of length 1024, each with three channels. 
Let $x_i \in \mathbb{R}^3$ denote the RGB values (rescaled and centered between $[-1, 1]$) of the $i$-th pixel in the sequence. 
To model the conditional distributions $p(x_i|x_1, \cdots, x_{i-1})$ for $i = 1, 2, \cdots, 1024$, the inputs of the network were chosen as $(0, x_1, x_2, \cdots, x_{1023})$, with the target output sequence being the parameters for distributions of $x_1, x_2, \cdots, x_{1024}$. 
We used the same discretized mixture of Logistics distribution and linear correlation structure from \citet{salimans2017pixelcnn} to model the RGB values.
We used 48 \ourlayer s with filter size 4, 512 channels, and the resolution fading memory mechanism.
Additionally, we added a 1x1 convolutional layer close to the output end of the residual block. 
The causal structure of \ourlayer\ ensures that the $i$-th output only depends on the first $i$ input elements, which keeps the conditional dependency pattern required for autoregressive modeling.  
We trained the network for 250 epochs with batch size 64.
We used Adam optimizer with default hyperparameters. 
We used a dropout rate 0.1, layer normalization, and an initial learning rate 0.001. 
The learning rate followed a cosine annealing procedure. 

\section{Additional Results}
\label{app:other-exps}

\begin{figure}[h]
    \includegraphics[width=0.6\textwidth]{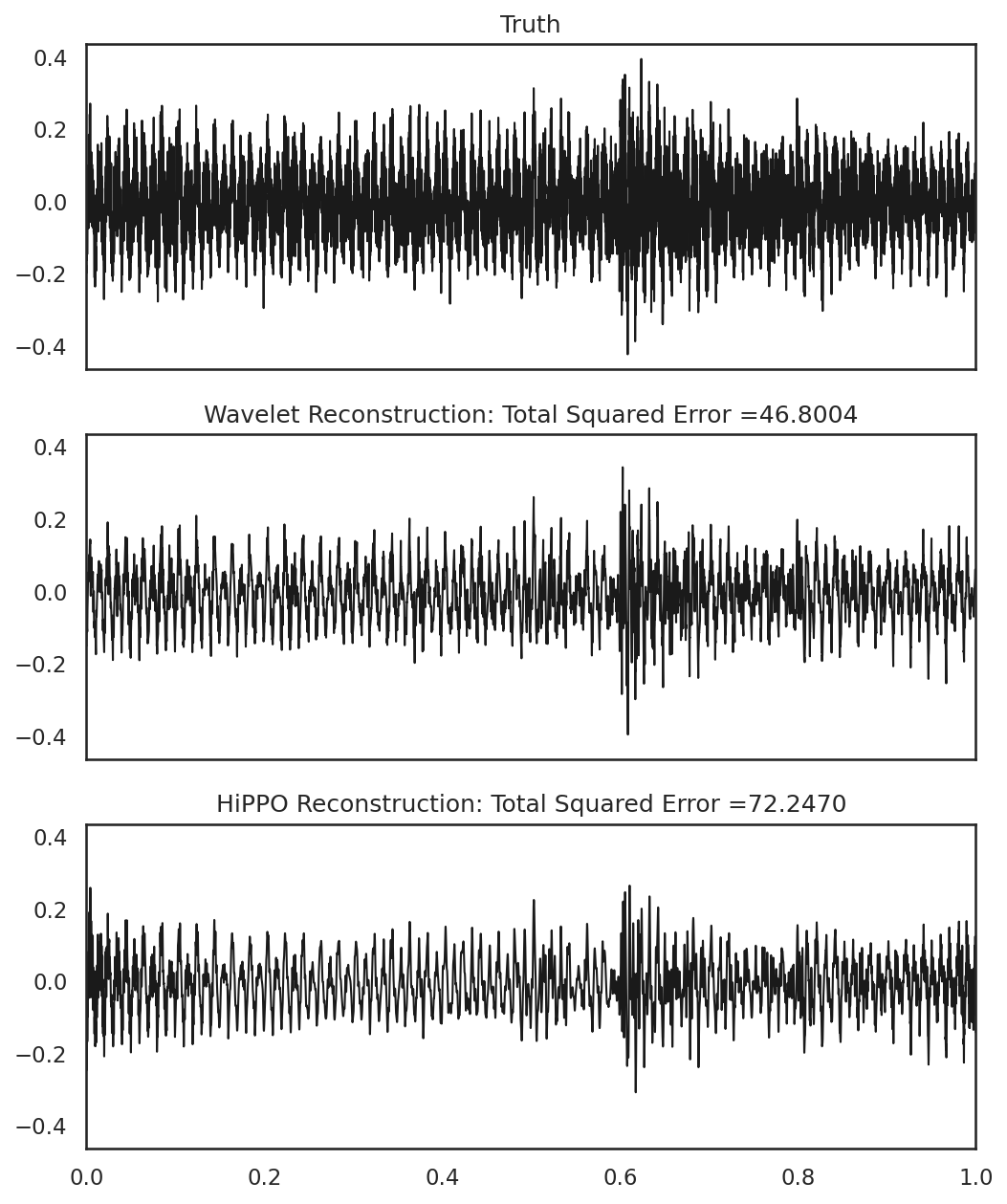}
    \centering
    \vspace{-3mm}
    \caption{Audio signal reconstruction from wavelet and polynomial bases. \emph{Top:} 1 second of a raw audio waveform with 16,384 time points. \emph{Middle:} Reconstruction from a 10-level Daubechies-4 wavelet decomposition with 2068 coefficients. \emph{Bottom:} Reconstruction from HiPPO~~\citep{gu2020hippo} via projection to orthogonal polynomials using 2068 coefficients (see \cref{sec:related-work}).}  %
\vspace{-4mm}
\label{fig:audio}
\end{figure}

\end{document}